\newcommand{\newreptheorem}[2]{%
\newenvironment{rep#1}[1]{%
 \def\rep@title{#2 \ref{##1}}%
 \begin{rep@theorem}}%
 {\end{rep@theorem}}}
\newcommand{\pazocal}{\cal}
\newcommand\independent{\protect\mathpalette{\protect\independenT}{\perp}}
\def\independenT#1#2{\mathrel{\rlap{$#1#2$}\mkern2mu{#1#2}}}
\newtheorem{THEOREM}{Theorem}
\renewenvironment{theorem}{\begin{THEOREM} }%
                        {\end{THEOREM}}
\newtheorem{LEMMA}[THEOREM]{Lemma}
\newenvironment{lemma}{\begin{LEMMA} \hspace{-.85em} {\bf :} }%
                      {\end{LEMMA}}
\newtheorem{COROLLARY}[THEOREM]{Corollary}
                          {\end{COROLLARY}}
\newtheorem{PROPOSITION}[THEOREM]{Proposition}
\newenvironment{proposition}{\begin{PROPOSITION} \hspace{-.85em} {\bf :} }%
                            {\end{PROPOSITION}}
\newtheorem{DEFINITION}[THEOREM]{Definition}
\newenvironment{definition}{\begin{DEFINITION} \rm}%
                            {\end{DEFINITION}}
\newtheorem{CLAIM}[THEOREM]{Claim}
                            {\end{CLAIM}}
\newtheorem{EXAMPLE}[THEOREM]{Example}
                            {\end{EXAMPLE}}
\newtheorem{REMARK}[THEOREM]{Remark}
                            {\end{REMARK}}
							\newtheorem{NOTATION}[THEOREM]{Notation}
							                            {\end{NOTATION}}
\renewenvironment{proof}{\noindent {\bf Proof:} \hspace{.677em}}%
                     {}
\DeclareMathAlphabet{\mathitbf}{OML}{cmm}{b}{it}
\renewcommand{\S}{{\cal S}}
\newcommand{\U}{{\bf U}}
\newcommand{\V}{{\bf V}}
\newcommand{\W}{{\bf W}}
\newcommand{\X}{{\bf X}}
\newcommand{\Y}{{\bf Y}}
\newcommand{\Z}{{\bf Z}}
\newcommand{\set}[1]{\left\{ #1 \right\}}
\newcommand{\blemma}{\begin{lemma}}
\newcommand{\elemma}{\end{lemma}}
\newcommand{\bthm}{\begin{theorem}}
\newcommand{\ethm}{\end{theorem}}
\newcommand{\bprf}{\begin{proof}}
\newcommand{\eprf}{\end{proof}}
\newcommand{\bpro}{\begin{proposition}}
\newcommand{\epro}{\end{proposition}}
\newcommand{\bi}{\begin{itemize}}
\newcommand{\ei}{\end{itemize}}
\newcommand{\be}{\begin{enumerate}}
\newcommand{\ee}{\end{enumerate}}
\newcommand{\beq}{\begin{equation}}
\newcommand{\eeq}{\end{equation}}
\newcommand{\bcase}{\begin{cases}}
\newcommand{\ecase}{\end{cases}}
\renewcommand{\mathit}{\emph}
\begin{document}


	
\title{Interventions and Counterfactuals \\ 
in Tractable Probabilistic Models}

\title{On the Causal Expressiveness of \\   Tractable Probabilistic Models}

\title{Interventions and Counterfactuals in Tractable Probabilistic Models:\\ Limitations of Contemporary Transformations}

\author{Ioannis Papantonis\institute{University of Edinburgh, \\ email:\tt~ i.papantonis@sms.ed.ac.uk} \and Vaishak Belle\institute{University of Edinburgh \& Alan Turing Institute, \\ email:\tt~ vaishak@ed.ac.uk}}


%
\maketitle

\begin{abstract} In recent years, there has been an increasing interest in studying causality-related properties in machine learning models generally, and in generative models in particular. While that is well motivated, it inherits the fundamental computational hardness of probabilistic inference, making exact reasoning intractable. Probabilistic tractable models have also recently emerged, which guarantee that conditional marginals can be computed in time linear in the size of the model, where the model is usually learned from data. Although  initially limited to low tree-width models, recent tractable models such as sum product networks (SPNs) and probabilistic sentential decision diagrams (PSDDs) 
exploit efficient function representations and also capture high tree-width models.  

In this paper, we ask the following technical question: can we use the distributions represented or learned by these models to perform causal queries, such as reasoning about interventions and counterfactuals? By appealing to some existing ideas on transforming such models to Bayesian networks, we answer mostly in the negative. We show that when transforming SPNs to a causal graph interventional reasoning reduces to computing marginal distributions; in other words, only trivial causal reasoning is possible. For PSDDs the situation is only slightly better. We first provide an algorithm for constructing a causal graph from a PSDD, which introduces augmented variables. Intervening on the original variables, once again, reduces to marginal distributions, but when intervening on the augmented variables, a deterministic but nonetheless \enquote{causal-semantics} can be provided for PSDDs.

\end{abstract}

\section{Introduction}

 In recent years, there has been an increasing interest in studying causality-related properties in machine learning models. For example, \cite{kansky2017schema} argue for the ability to assess past observations and explain away alternative causes in deep reinforcement learning methods. In \cite{DBLP:journals/corr/abs-1811-10597},  the question of what units are responsible for controlling and manipulating certain features within an image is considered. In \cite{DBLP:journals/corr/abs-1812-03253}, strategies to give a causal interpretation to the intrinsic structure of deep learning models is investigated. Broadly speaking \cite{pearl2019seven}, the motivation stems from extending the query and reasoning capabilities over probabilistic domains. That is, in standard probabilistic models, one is simply interested in \emph{conditioning on observations} \( 
 \Pr(y
 \mid x) \): e.g., what is the likelihood of lung inflammations given that the patient smokes? 
 %
 Causal reasoning allows us to reason about \emph{interventions} \( 
 \Pr(y \mid do(x)) \): e.g., how are  lung inflammations affected when the patient reduces the amount of tobacco   smoked  in a day? \emph{Counterfactual queries} allow us to directly reason about alternate  worlds \( 
 \Pr(y \mid x^ *) \): e.g., what state would the patient's lung inflammations be in had   he not smoked in the previous year? Thus, causal reasoning allows us to inspect our domain model much more comprehensively than possible by observational conditioning alone.

A fundamental challenge underlying stochastic models, however, is the intractability of inference \cite{article}. This has led to the paradigm of \emph{tractable probabilistic models}, where conditional or marginal distributions can be computed in time linear in the size of the model. Although  initially limited to low tree-width models \cite{bach2002thin}, recent tractable models such as sum product networks (SPNs)  \cite{6130310,SPN_structure_learning} and probabilistic sentential decision diagrams (PSDDs) \cite{kisa2014probabilistic} are derived from arithmetic circuits (ACs) and knowledge compilation  approaches, more generally  \cite{darwiche2002logical,choi2017relaxing}, which  exploit efficient function representations and also capture high tree-width models. These models can also be learnt from data \cite{Peharz2014LearningSS,kisa2014probabilistic} which leverage the efficiency of inference. Consider that in classical  structure learning approaches for graphical models, once learned, inference would have to be approximated, owing to its intractability. In that regard, such models offer a robust and tractable framework for learning and inferring from data.

Naturally, owing to such attractive properties, the theoretical underpinnings of such models have received considerable attention. One the one hand, when viewed from a knowledge representation angle, they are related to tractable representations of Boolean functions, including BDDs \cite{Akers:1978:BDD:1310167.1310815} and d-DNNFs \cite{ddnnf,kisa2014probabilistic}. On the other hand, from a probabilistic modeling perspective, they can be derived as instances of ACs, providing a tractable representation of probabilistic reasoning, owing to the fact that ACs can compactly represent and compute the network polynomial of a Bayesian network (BN)  \cite{Darwiche2000ADA}. In the presence of latent variables they can also be seen as a deep architecture with probabilistic semantics \cite{6130310}, leading to numerous extensions, e.g.,  for mixed discrete-continuous domains \cite{inproceedings18}, and applications, including preference ranking \cite{Choi2015TractableLF}, classification \cite{Liang2018LearningLC} and computer vision \cite{6130310,article17}. Owing to its clear probabilistic semantics, in \cite{NIPS2011_4350}, the expressive power of such deep models is studied, and in \cite{Zhao:2015:RSN:3045118.3045132}, the relationship between SPNs and BNs has been further analyzed.

In this paper we push the envelope on this inquiry towards the following objective: can tractable models offer not only a computationally attractive but also compelling alternative to standard graphical models, especially 
when reasoning about causality? In fact, on studying  the relationship between SPNs and BNs  \cite{Zhao:2015:RSN:3045118.3045132}, the authors conclude with: 
\begin{quote}\it
The structure of the resulting BNs can be 
used to study probabilistic dependencies 
and causal relationships between the
variables of the original SPNs.   
\end{quote}

Unfortunately, we answer in the negative for SPNs and PSDDs. 

For SPNs, using the transformation from \cite{Zhao:2015:RSN:3045118.3045132} we are going to show that the resulting graph is not sufficient for studying causal relationships between the variables. Roughly, the problem is that this class of models allows for a lot of expressive freedom, and, because of that, all the correlations between the variables are attributed to external latent factors. Next, for PSDDs, we first provide an algorithm for constructing a causal graph that also needs to introduce some augmented variables, which conforms to PSDD on all probabilistic queries. On the one hand, intervening on the original variables in the resulting causal graph is also uninteresting, and reduces to computing marginal distributions, like in  SPNs. However we can perform non-trivial counterfactuals on the augmented variables. This is possible because, in contrast to SPNs, PSDDs impose more restrictions on the structure of the resulting model, specifically in terms of its equivalence to a propositional formula, which we then can use  to recover a structural equation model (SEM) \cite{Pearl2009CausalII}. We note that this structure is of a somewhat \enquote{deterministic} nature, and so, in a sense, the result is also negative. Nonetheless, we can provide a ``causal semantics" for PSDDs in the process.

\textit{We reiterate that our focus is purely on the distributions represented or learnt using tractable probabilistic models, and specifically SPNs and PSDDs.} These models do not come with any guarantees that the dependencies learnt actually capture the underlying causal process of the domain (in contrast to approaches such as \cite{DBLP:conf/icml/GhassamiSKB18}). \textit{Throughout the rest of our analysis, we suppose that the causal graph is not known beforehand and our aim is to examine what kind of information can be recovered using the trained SPNs or PSDDs. Our results demonstrate that regardless of whether these models do capture causal information, performing causal reasoning on them is not immediately attractive.}

We are organized as follows. We first consider the SPN case, and then move on to the PSDD case. We will finally conclude with some discussions.

\section{Background}

We will briefly review PSDDs, SPNs and SEMs,  and we refer the reader to \cite{kisa2014probabilistic,6130310,10.2307/3541871} for more extensive discussions.  

Our notation will be as follows: An uppercase letter \( X \) denotes a Boolean random variable. In the context of a probabilistic statement, we will use a lowercase letter \( x \) to denote an assignment  to \( X \); for example, \( \Pr(X=x) \) where \( x\in \set{0,1} \) denotes the probability of the event where \( X \) is assigned the value \( x. \) In the context of a logical formula, $X$ and $\neg X$ respectively assign \emph{true} ($\top$) and \emph{false} ($\bot$) to variable $X$. Sets of variables $\X$ and joint assignments $\bf x$ are denoted in {\bf bold}. \smallskip

{\bf PSDDs.} ~~ 
The idea behind PSDDs is to use Sentential Decision Diagrams (SDDs) \cite{inproceedings} to represent a propositional logic theory, and then recursively define a probability distribution over it. Terminal nodes can be either a literal, $\top$, or $\bot$, while decision (intermediate) nodes are of the form $(p_1 \wedge s_1)\vee \dots \vee (p_k \wedge s_k)$, where the $p_i's$ are called primes and the $s_i's$ subs. The primes form a partition, meaning they are mutually exclusive and their disjunction is valid. Each prime $p_i$ in a decision node  is assigned a non-negative parameter $\theta_i$ such that $\sum_{i=1}^k \theta_i=1$ and $\theta_i=0$ if and only if $s_i=\bot$. Additionally each terminal node corresponding to $\top$ has a parameter $\theta$ such that $0 < \theta < 1$. Using this notation, a PSDD node $n$  defines a distribution over the variables of the \textit{vtree node}   $u$ that it is normalized for, as follows. (The notion of a vtree, defined in \cite{vtree}, is needed to fully define an SDD; they can be obtained directly from data or by compiling domain constaints \cite{Liang2017LearningTS}.) \begin{itemize}
    \item If $n$ is terminal node, and $u$ has variable $X$, then 
    \begin{center}
        \begin{tabular}{l||l|l}

~~~$n$ &  $Pr_n(X)$ & $Pr_n(\neg X)$\\
\hline
~~~$\top$ & $\theta$ & 1 - $\theta$\\
\hline
~~~$\bot$ & 0 & 0\\
\hline
~~~$X$ & 1 & 0\\
\hline
$\neg X$ & 0 & 1\\

\end{tabular}
    \end{center}
\item If $n$ is a decision node and $u$ has left children $\bf X$ and right children $\bf Y$, then $Pr_n ( {\bf x}, {\bf y})= Pr_{p_i} ({\bf x}) \cdot Pr_{s_i} ({\bf y}) \cdot {\theta}_i$, for $i$ where $ {\bf x} \models p_i$, and where $Pr_{p_i}(\cdot),Pr_{s_i}(\cdot)$ denote the distribution of the PSDD nodes corresponding to $p_i,s_i$, respectively.
\end{itemize}

{\bf SPNs.}~~ SPNs are rooted directed graphical models that provide for an efficient way of representing the network polynomial \cite{Darwiche2000ADA} of a BN \cite{6130310}, as a multilinear function $\sum_{\textbf{x}}f(\textbf{x})\prod_{n=1}^{N} \mathbbm{1}_{x_n}$. Here $f(\cdot)$ is the  (possibly unormalized) probability distribution of the BN, $\textbf{x}$ is a vector containing all the variables of the model, i.e.,  $x_1,\cdots,x_N$, the summation is over all possible states, and $\mathbbm{1}_{x_n}$ is the indicator function. An SPN $\cal S$ over Boolean variables $x_1,\cdots,x_N$ has leaves corresponding to indicators $\mathbbm{1}_{x_1},\cdots,\mathbbm{1}_{x_n}$ and $\mathbbm{1}_{\bar x_1},\cdots,\mathbbm{1}_{\bar x_n}$
and whose internal nodes are sums and products.

Any edge exiting a sum node has a non-negative weight assigned to it. The value of a product node is the product of its children, while the value of a sum node is a weighted sum of its children, $\sum_{u_j \in Ch(u_i)} w_{ij}\S_{j}(\textbf{x})$, where $Ch(u_i)$ is the set containing the children of node $u_i$, and $\S_{j}$ is the sub-SPN rooted at node $u_j$. SPNs can represent a wide class of models, including weighted mixtures of univariate distributions; see  \cite{6130310} for discussions. 

{\bf Causality.}~~ 
 We  base our causal analysis on SEMs \cite{Pearl2009CausalII}, which provide an effective way to encode dependencies between variables, as well as allow for queries regarding interventions   and counterfactuals. In this setting  we can represent a set of probabilistic dependencies through a BN, as usual, but on top of that we can also encode the specific mechanism that determines the value of each variable. In this sense, it is more general than just having a BN, since we not only possess a distribution over the variables, but also a (either stochastic or deterministic) set of equations. In what follows we denote by $\V$ the set of variables that are internal to the model, and by $\U$ the exogenous or external variables (that act as random, latent, factors). We use $\pazocal{R}$ to denote the set containing the plausible values of each variable. Every endogenous (internal) variable is assigned an equation determining its value as a function of both its endogenous and exogenous parents in the BN, called structural equation. Finally, in what follows, we make the standard assumption that these BNs do not contain any directed cycle, so they are equivalently referred to as  directed acyclic graphs (DAGs). 
 

\begin{definition}\label{defn:causalmodel}  A causal model $\pazocal{M}$ is a pair $(\pazocal{S},\pazocal{F} )$ where $\pazocal{S}$ is a signature $(\U, \V,\pazocal{R})$ and $\pazocal{F}$ 
is a set of structural equations $ \{\pazocal{F}_V : V \in \V \}.$

\end{definition}

One of the advantages of employing graphical models is that by just utilizing the topology of the graph we can answer probabilistic queries, such as whether two sets of variables are dependent. 

\begin{definition}\label{defn:dsep}
 A directed path is d-separated (blocked) by a set of nodes, \Z, iff one of these hold:
\begin{enumerate}
\item It contains a triple $ X  \rightarrow Z \rightarrow Y$ or $X \leftarrow Z \rightarrow Y$, such that $Z \in \Z$.
\item It contains a triple $X\rightarrow Z \leftarrow Y$, such that neither \( Z \) nor any of its descendants are in \Z.
\end{enumerate}  
\end{definition}

\indent Two sets \X,\Y\ are d-separated by \Z\ if and only if every path between any two nodes $X \in \X$, $Y \in \Y$ is blocked by \Z. It is a well established result that if two nodes are d-separated by a set \Z, then they are conditionally independent (where \Z\ is the conditioning set). As we mentioned earlier, SEMs allow for studying interventional distributions, meaning the distribution of a set of variables, after we force a second set of variables to attain certain values. We denote the distribution of $Y$ after the intervention $X=x$, by $\Pr(Y=y|do(X=x))$ or $\Pr(y|do(x))$. In order to study such probabilistic statements we transform the original DAG corresponding to our model, by deleting all the edges pointing towards \( X \), set \( X \) to \( x \), and then proceed with the analysis. What follows  is an essential graphical tool for deciding under what conditions we can reduce interventional queries to conditional ones. Here, $G_{\overline{x}}$ denotes the graph obtained after deleting all the edges pointing to \( X \), $G_{\underline{x}}$ the one resulting after deleting all the edges emerging from \( X \), and $G_{\overline{x} \underline{z}}$ for deleting both kinds of edges from \( X \).

\begin{definition} \textbf{(Rules of do-Calculus)} Let $\pazocal{G}$ be a DAG corresponding to a SEM and $\Pr(\cdot)$ the probability measure induced by it. If \X, \Y, \Z, \W\ are disjoint sets, then the following hold:
\begin{itemize}
\item[] \textbf{Rule 1:} $\Pr(y| do(x),z,w)$ = $\Pr(y| do(x),w)$ if ${(\Y \independent \Z| \X,\W)}_{G_{\overline{X}}}$.
\item[] \textbf{Rule 2:}  $\Pr(y| do(x),do(z),w)= \Pr(y| do(x),z,w)$ if ${(\Y \independent \Z| \X,\W)}_{G_{\overline{X} \underline{Z}}}$.
\item[] \textbf{Rule 3:} $\Pr(y| do(x),do(z),w)= \Pr(y| do(x),w)$ if ${(\Y \independent \Z| \X,\W)}_{G_{\overline{X} \overline{Z(W)}}}$, where \Z(\W) is the set of \Z-nodes that are not ancestors of any \W-node in $G_{\overline{X}}$.
\end{itemize}
\end{definition}

\section{Main Results}

\subsection{The SPN Case}

As discussed, SPNs are an elegant formalism for capturing weighted mixtures of distribution, and so the expressive power of SPNs and standard BNs has been of considerable interest. The question of how to transform SPNs to BNs and the recoverability of the SPN from the transformation was studied in \cite{Zhao:2015:RSN:3045118.3045132}. 
For space reasons, we cannot provide  too many details on this transformation, but the key idea is to compactly represent the local conditional probability distribution in the corresponding BN by exploiting context specific independence. Intuitively, we create a node for every observable variable, a latent variable for every sub-SPN, and then draw an arrow from each latent variable to the observable variables corresponding to the scope of the sub-SPN. This procedure yields a bipartite graph with arrows stemming only from latent to observable variables.

To our knowledge this is the only way proposed so far to turn an SPN to a BN, and many subsequent papers on SPNs' theoretical properties \cite{Peharz2016OnTL} are similar in thrust. And, as stated previously, the authors of \cite{Zhao:2015:RSN:3045118.3045132} were hopeful about the causal expressiveness of their approach. 

So we will base our analysis on that approach.

We first make the following technical observation about graphs having this topology.

\begin{theorem}
Let $\pazocal{G}$ be the DAG associated with a causal model $\pazocal{M}=(\pazocal{S},\pazocal{F})$. For any set $\X \subseteq \V$ such that no node in \( 
\X \) has an edge coming out of it, the interventional distribution of the remaining variables equals their joint distribution, i.e. $\Pr(u,v_{-x}|do(x))=\Pr(u,v_{-x})$,  where $u \subseteq \U$ and $v_{-x} \subseteq \V \setminus \X$. More specifically, we have that the rest of the remaining variables are unaffected by the intervention, i.e. $\Pr(v_{-x}|do(x))=\Pr(v_{-x})$.
\end{theorem}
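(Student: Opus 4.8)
The plan is to reduce the claim to a single application of Rule 3 of the do-calculus, whose hypothesis becomes trivially true under the stated structural assumption on $\X$. First I would instantiate Rule 3 by taking the output set to be $\Y = \U \cup (\V \setminus \X)$ with joint assignment $(u, v_{-x})$, taking the set we wish to remove from under the $do$-operator to be $\X$ itself, and setting the remaining $do$-set and the conditioning set $\W$ both to be empty. With these choices Rule 3 asserts exactly that $\Pr(u, v_{-x} \mid do(x)) = \Pr(u, v_{-x})$, provided the independence $(\Y \independent \X)_{G_{\overline{X}}}$ holds in the mutilated graph. Because $\W = \emptyset$, every node of $\X$ is vacuously not an ancestor of any $\W$-node, so the graph appearing in the rule collapses to exactly $G_{\overline{X}}$, the graph obtained by deleting all edges pointing into $\X$.

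The heart of the argument is then verifying this d-separation condition, and here the hypothesis does all the work. In $G_{\overline{X}}$ every edge pointing into a node of $\X$ has been deleted; by assumption no node of $\X$ has an outgoing edge either (in particular there can be no edge between two nodes of $\X$, since such an edge would be outgoing for its source). Hence every node of $\X$ is isolated in $G_{\overline{X}}$. An isolated node lies on no path to any other node, so by Definition~\ref{defn:dsep} every path between a node of $\X$ and a node of $\Y$ is (vacuously) blocked, and therefore $(\Y \independent \X)_{G_{\overline{X}}}$ holds. Applying Rule 3 then yields the first equality. For the ``more specifically'' claim I would simply marginalize: summing the identity $\Pr(u, v_{-x} \mid do(x)) = \Pr(u, v_{-x})$ over the exogenous assignment $u$ gives $\Pr(v_{-x} \mid do(x)) = \Pr(v_{-x})$.

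I expect the only real subtlety — more a bookkeeping point than a genuine obstacle — to be the correct instantiation of Rule 3: confirming that with $\W = \emptyset$ the condition genuinely reduces to d-separation in $G_{\overline{X}}$ rather than in some smaller mutilation, and that isolating the $\X$-nodes is enough for the independence to hold jointly over all of $\Y$ and $\X$, not merely pairwise. As an independent cross-check I would also re-derive the result from the truncated (interventional) factorization: since no $X \in \X$ is a parent of any remaining variable, none of the surviving conditional factors depends on $\x$, so fixing $\X = \x$ leaves the product untouched; moreover the factors $\prod_{X \in \X}\Pr(x \mid pa(x))$ that the intervention deletes are precisely the ones that sum to $1$ over $\x$ when one computes the observational marginal $\Pr(u, v_{-x})$. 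The interventional and observational expressions thus coincide term by term, confirming the do-calculus derivation.
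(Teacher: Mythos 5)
Your proof is correct and follows essentially the same route as the paper's: both apply Rule 3 of the do-calculus with the conditioning and residual $do$-sets empty, observe that deleting incoming edges plus the hypothesis of no outgoing edges leaves every node of $\X$ isolated in $G_{\overline{X}}$ so the required d-separation holds vacuously, and then obtain the second claim by marginalizing over $u$. Your additional bookkeeping (spelling out the instantiation of $\Z(\W)$ when $\W=\emptyset$) and the cross-check via the truncated factorization are sound but not needed beyond what the paper already does.
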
 
\begin{proof} 
Using the 3rd rule of Pearl's do-calculus, it suffices to show that ${(\X \independent \U \cup (\V \setminus \X))}_{G_{\overline{X}}}$. By assumption, no edges emanate from nodes in $\X$, which implies that each of them will be isolated in $G_{\overline{X}}$, so the desired independence holds, meaning that $\Pr(u,v_{-x}|do(x))=\Pr(u,v_{-x})$. In addition, we have that $\Pr(v_{-x}|do(x))= \sum_{U} \Pr(u,v_{-x}|do(x)) = \sum_{U} \Pr(u,v_{-x}) = \Pr(v_{-x})$. \qed \vspace{10pt}

\end{proof}
Unfortunately, since the BN stemming from the algorithm in \cite{Zhao:2015:RSN:3045118.3045132} has no edge coming out of an observable variable, we get the following:

\begin{theorem}\label{th1 SPN}
The BN, $\cal G$, that results after transforming an SPN using the procedure described in \cite{Zhao:2015:RSN:3045118.3045132} satisfies the property $\Pr(v_{-x}|do(x))=\Pr(v_{-x})$, for any $\X \subseteq \V$.
\end{theorem}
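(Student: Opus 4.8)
The plan is to recognize this statement as an immediate corollary of the preceding theorem, so that the only real task is to verify that its single hypothesis is satisfied by every subset $\X \subseteq \V$. First I would recall the precise shape of the graph produced by the transformation of \cite{Zhao:2015:RSN:3045118.3045132}: it is a bipartite DAG in which the observable variables — which are exactly the members of $\V$ — form one side, the latent variables (one per sub-SPN) form the other, and every directed edge runs from a latent node into an observable node. The decisive structural fact is therefore that each observable variable has out-degree zero.

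Next I would identify the latent nodes with the exogenous set $\U$ and the observable nodes with the endogenous set $\V$, so that the setting matches the causal model $\pazocal{M} = (\pazocal{S}, \pazocal{F})$ assumed in the preceding theorem. Given this identification, fix an arbitrary $\X \subseteq \V$. Since $\X$ consists solely of observable variables and no observable variable emits an edge, no node of $\X$ has an edge coming out of it, which is precisely the hypothesis required. Applying the preceding theorem directly then yields $\Pr(u, v_{-x} \mid do(x)) = \Pr(u, v_{-x})$ and, after marginalizing over $\U$, the claimed identity $\Pr(v_{-x} \mid do(x)) = \Pr(v_{-x})$.

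The content here lies almost entirely in the preceding theorem; the one thing that genuinely needs to be nailed down is the structural claim that the transformation never introduces an edge out of an observable variable. I expect this to be the main (and only) obstacle, and I would discharge it by appealing to the bipartite construction of \cite{Zhao:2015:RSN:3045118.3045132}, in which arrows stem only from latent to observable variables by design. Once that is in hand, the universal quantifier over $\X \subseteq \V$ is free of charge, since the zero-out-degree property holds simultaneously for every observable variable rather than for some specially chosen subset.
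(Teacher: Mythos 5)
Your proposal is correct and takes exactly the paper's route: the paper offers no separate proof, simply noting that the BN produced by the transformation of \cite{Zhao:2015:RSN:3045118.3045132} has no edge coming out of any observable variable, so the preceding theorem applies verbatim. Your identification of latents with $\U$ and observables with $\V$, plus the marginalization over $\U$, is the same (implicit) argument spelled out slightly more carefully.
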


So this result answers that the method proposed in \cite{Zhao:2015:RSN:3045118.3045132} for producing a BN is not useful for causal inference tasks, since we cannot really study interventional distributions utilizing it.

\indent What are the reasons for this limitation? As has been noted in previous work \cite{Peharz2016OnTL,6130310}, sum nodes in SPNs can be interpreted as marginalized, latent, variables, whose values correspond to the children of the sum node. Thus, when an SPN is turned into a BN all of the variables within the scope of a sum node are treated as children of a latent variable. This leads to every probabilistic dependency being attributed to an unobserved confounder, and there is no edge between the SPN variables. Thus, it is reasonable that any intervention on a subset of the observable variables would not affect the rest, because the mechanism encoded in the graph tells that no variable has any causal effect on the others.

\indent A special class of SPNs, referred to as selective SPNs were introduced recently \cite{Peharz2014LearningSS}. They impose determinism in that only one of the children of a sum node can be true for any given variable assignment. Interestingly, even this stipulation does not remedy the problem, since the discussion in \cite{Peharz2016OnTL} makes clear the resulting BN would still have no edges between the SPN variables. Consequently, we get:
\begin{theorem}\label{th2 SPN}
The BN, $\cal G$, that results after transforming a selective SPN using the procedure described in \cite{Peharz2016OnTL}  satisfies the property $\Pr(v_{-x}|do(x))=\Pr(v_{-x})$, for any $\X \subseteq \V$.
\end{theorem}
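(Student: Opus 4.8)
The plan is to reduce this statement to the general technical observation proved at the start of this section (the first Theorem), in exactly the same manner as Theorem~\ref{th1 SPN}. That observation guarantees that whenever we intervene on a set $\X\subseteq\V$ none of whose nodes has an outgoing edge, the remaining variables satisfy $\Pr(v_{-x}\mid do(x))=\Pr(v_{-x})$. Hence the entire burden of the proof is to verify a single structural claim: that the BN produced by the transformation of \cite{Peharz2016OnTL} from a selective SPN has no edge leaving any observable (SPN) variable.

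First I would recall the shape of the graph output by \cite{Peharz2016OnTL}. As in the transformation of \cite{Zhao:2015:RSN:3045118.3045132}, each sum node is interpreted as a marginalized latent variable, and directed edges are drawn only from these latent nodes to the observable variables lying in the corresponding scope. The point I would make explicit is that selectivity alters only the parameterization and the support of the local conditional distributions --- since exactly one child of a sum node is active for any given assignment --- but it does not introduce any arc among the observable variables themselves. In other words, the resulting DAG $\pazocal G$ remains one in which every arrow emanates from a latent node and terminates in an observable node, so each observable variable is a sink with respect to the other observables.

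Given this, the argument concludes immediately. For an arbitrary $\X\subseteq\V$, no node of $\X$ has an edge coming out of it in $\pazocal G$, so in $G_{\overline X}$ each such node is isolated from the remaining observable variables (its only potential neighbours are its latent parents, whose edges point into $\X$). The hypothesis of the general Theorem is therefore met with $\X$ playing the required role, and invoking it --- which in turn rests on Rule~3 of the do-calculus --- yields $\Pr(v_{-x}\mid do(x))=\Pr(v_{-x})$ for every $\X\subseteq\V$, as claimed.

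The main obstacle is precisely the structural claim of the middle paragraph: one must be sure that imposing determinism (selectivity) does not, through the compilation of context-specific independences, create a direct dependence edge between two observable SPN variables. I would address this by appealing directly to the analysis of \cite{Peharz2016OnTL}, which characterizes the graphs obtained for selective SPNs and shows that they retain the latent-to-observable bipartite orientation, with all determinism absorbed into the conditional distributions attached to the latent variables rather than into new edges among observables. Once that is granted, nothing in the interventional argument is specific to selective SPNs, and the proof becomes identical to that of Theorem~\ref{th1 SPN}.
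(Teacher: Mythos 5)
Your proposal is correct and follows exactly the paper's own route: the paper likewise justifies this theorem by noting (via the discussion in \cite{Peharz2016OnTL}) that the BN obtained from a selective SPN still has no edges emanating from the observable variables, and then invoking the general observation proved at the start of the section, whose proof rests on Rule~3 of the do-calculus applied to the isolated nodes in $G_{\overline{X}}$. Your added care in arguing that selectivity affects only the parameterization of the local conditionals, not the latent-to-observable orientation of the edges, is precisely the point the paper delegates to \cite{Peharz2016OnTL}.
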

We suspect that to get rid of this limitation SPNs should be augmented in a way that captures the functional dependency between the variables in the scope of a sum node. Another strategy perhaps is to enable a more expressive way to represent and reason about probabilistic dependencies between the variables, although it is not immediate how this could be made possible.

\subsection{The PSDD Case}

Interestingly, the situation turns out to be slightly better for PSDDs. The intuitive reason for that is because of the dependency that can be established between a node in the PSDD and its children. More precisely, consider that each node $n$ in a PSDD \cite{kisa2014probabilistic} has a support -- the set of assignments it assigns a positive probability -- which is related to the support of its children. This set is called the $\textit{base}$ of $n$ and is denoted by $[n]$. It can also be defined as a logical formula: if $n$ is a decision node $(p_1 \wedge s_1)\vee \dots \vee (p_k \wedge s_k)$, then $[n]= \bigvee_{i=1,\dots,k} [p_i] \wedge [s_i]$. Since $p_i$'s form a partition, their corresponding bases are disjoint, as well, so a decision node can be seen as deciding between different possible worlds, based on which prime base was satisfied. Since the prime bases of a node form a partition we can apply the law of total probability and Proposition 1 from \cite{liang2017learning} to get that $Pr_n(\textbf{x},\textbf{y}) = \sum_{i=1,\dots,k} Pr_n(\textbf{x},\textbf{y},[p_i]) = \sum_{i=1,\dots,k} Pr_n(\textbf{x},\textbf{y}|[p_i])\cdot Pr_n([p_i]) = \sum_{i=1,\dots,k} Pr_n(\textbf{x}|[p_i])\cdot Pr_n(\textbf{y}|[p_i]) \cdot Pr_n([p_i])$. Combining this expression with the semantics provided in Proposition 1 in \cite{liang2017learning} and Theorem 2 in \cite{kisa2014probabilistic}, as well as the fact that under any given assignment the only non-zero term of the form $Pr_n(\textbf{x}|[p_i])$ is the one for which $\textbf{x} \models p_i$, we see that the probability of a node is not a mixture over its children (as in SPNs). Indeed, the distribution of  decision node is understood very differently. In fact, we can also see that PSDD nodes do not condition on a latent variable, but on their prime bases instead, which do not depend on unobserved quantities.

\indent  Our work builds on this observation and the fact that by construction PSDDs are probabilistic extensions of SDDs, which, in turn, denote a propositional formula. Basically, we use that formula to create an augmented set of variables, not just the original ones the PSDD used for training, in such a way so the PSDD distribution and the BN one agree on the original variables. It is worth noting that the resulting BN is also equipped with a set of equations that determine the value of the children as functions of their parents, so we end up having a SEM. Below we present the procedure to construct this SEM, where the input propositional formula is the one represented by the trained PSDD.

\begin{algorithm}[h]
\KwIn{A formula $\phi=c_1\vee \dots \vee c_n$, where $c_i=p_i\wedge s_i$, over propositional variables $x_1,\dots x_k$}
\KwOut{A SEM model with the augmented variables}

\nl  Create a variable corresponding to the whole expression, $v_0=\phi$\;
\nl  Create a variable, $v_i$, for each $c_i$\;
\nl  Create an arrow from $v_i$ to $v_0$, $i=1,\dots,n$\;
\nl  For each $c_i=p_i\wedge s_i$ create a variable $v_{i}^{p}$ for $p_i$ and a variable $v_{i}^{s}$ for $s_i$\;
\nl  Create an arrow from $v_{i}^j$ to $v_i$, for $j \in \{p,s\}$\;
\nl  Repeat this process recursively, until the original variables are reached\; 
\nl When this procedure is over, create a hidden variable and connect it with each one of the original variables with an arrow pointing at them.
    \caption{{\bf Expression to SEM} \label{Algorithm}}

\end{algorithm}

\noindent \textbf{About the hidden variable: }The latent variable, $\textbf{H}$, is the only component of the graph that is purely stochastic, and we motivate its need here. 

Note that any instantiation of it is enough to determine all the other variables in the model. Conversely, each probabilistic query about any of the rest of the variables can be reduced into another query relying solely on \textbf{H} (since there is no other source of randomness in the model). Its dimension is equal to the number of the original variables in the PSDD and its distribution is equal to the PSDD distribution of the original variables.  Denoting by $\Pr_{SEM}(\cdot)$ the probability measure over the DAG's variables and by $\Pr_{PSDD}(\cdot)$ the PSDD distribution over the original variables, we set these two measures to satisfy the following condition $\Pr_{SEM}(H_1,H_2,\dots,H_n)=\Pr_{PSDD}(X_1,X_2,\dots,X_n)$. Suppose the PSDD is  comprised of $n$ variables, $X_1,X_2,\dots,X_n$, then $\textbf{H}=(H_1,H_2,\dots,H_n)$. The structural equations connecting them are: \(
	X_1=H_1, X_2=H_2,\dots,X_n=H_n. 
\)

Looking at these equations we see that: $\Pr_{SEM}(X_1,X_2,\dots,X_n)=\Pr_{SEM}(H_1,H_2,\dots,H_n)=\Pr_{PSDD}(X_1,X_2,\dots,X_n).$
This remark assures us about the consistency between the PSDD and the SEM distribution of the original variables. We would also like to note that although $\textbf{H}$ is introduced as a vector, it could be rewritten as a simple categorical variable with an exponential number of states, each one corresponding to a different configuration of the original variables. We present this result in a more formal way, using the vectorized version of $\textbf{H}$. 

\begin{theorem}
\label{psddprop}
Let \textbf{P} be a PSDD over variables $X_1,X_2,\dots,X_n$ and let $\pazocal{G}$ be the DAG resulting from Algorithm 1. The distribution of $X_1,X_2,\dots,X_n$, induced by $\pazocal{G}$ is equal to their PSDD distribution, meaning that $\Pr_{SEM}(X_1,X_2\dots,X_n)=\Pr_{PSDD}(X_1,X_2,\dots,X_n)$.
\end{theorem}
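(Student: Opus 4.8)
The plan is to reduce the statement to the short chain of equalities already anticipated in the remark preceding it; the only genuine content is the justification that the many augmented variables introduced by Algorithm~1 leave the marginal over the original variables untouched. First I would pin down the graph-theoretic shape of $\mathcal{G}$. By construction the arrows run from each subformula variable $v_i^{p}, v_i^{s}$ upward to its parent $v_i$ and ultimately to $v_0=\phi$, whereas the hidden variable $\mathbf{H}$ has arrows pointing only into the original variables $X_1,\dots,X_n$. Hence the originals have $\mathbf{H}$ as their unique ancestor, every augmented variable is a \emph{descendant} of the originals, and the graph is acyclic.

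Next I would argue that $\mathbf{H}$ is the only stochastic component. Each original variable obeys the structural equation $X_i = H_i$, and each augmented variable is obtained by evaluating the Boolean (sub)formula it represents on the assignment to $X_1,\dots,X_n$. Because the primes of every decision node form a partition, this evaluation is single-valued for each assignment, so every augmented variable is a well-defined deterministic function of the originals. Thus all randomness in $\mathcal{G}$ is carried by $\mathbf{H}$.

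The core step is then the marginalization. To read off $\Pr_{SEM}(x_1,\dots,x_n)$ from $\mathcal{G}$ we sum the full joint over $\mathbf{H}$ and over all augmented variables. Since the augmented variables are deterministic descendants of the originals, for each fixed assignment of $(X_1,\dots,X_n)$ there is exactly one consistent assignment of them, so summing over them contributes a factor of $1$; the marginal collapses to $\sum_{\mathbf{h}} \Pr_{SEM}(\mathbf{h}, x_1,\dots,x_n)$. The equations $X_i = H_i$ make the event $\{X_1=x_1,\dots,X_n=x_n\}$ identical to $\{H_1=x_1,\dots,H_n=x_n\}$, so this marginal equals $\Pr_{SEM}(H_1=x_1,\dots,H_n=x_n)$, and invoking the defining condition $\Pr_{SEM}(H_1,\dots,H_n)=\Pr_{PSDD}(X_1,\dots,X_n)$ closes the argument.

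I expect the marginalization to be the main obstacle: one must make precise that the augmented variables, being deterministic descendants, sum out to unity and therefore cannot perturb the marginal over the originals. This rests on the two facts I would verify explicitly, namely acyclicity of $\mathcal{G}$ with the originals strictly upstream of every augmented node, and single-valuedness of the deterministic maps, which follows from the partition property of PSDD primes. Everything else is the routine chain of equalities already set up in the construction of $\mathbf{H}$.
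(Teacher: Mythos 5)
Your proposal is correct and follows essentially the same route as the paper: the paper's own justification is exactly the chain $\Pr_{SEM}(X_1,\dots,X_n)=\Pr_{SEM}(H_1,\dots,H_n)=\Pr_{PSDD}(X_1,\dots,X_n)$ given in the remark preceding the theorem, with the marginalization over the deterministic augmented descendants left implicit, which you usefully make explicit. One small note: the single-valuedness of the augmented variables needs no appeal to the partition property of primes --- it follows simply from the fact that each augmented variable's structural equation is a Boolean function of its parents, so evaluation under any fixed assignment of the originals is automatically unique.
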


Interestingly, the SEM obtained from a PSDD in this manner has the same limitations as identified for SPNs when intervening on the original variables: 
\begin{theorem}
The SEM, $\cal S$, that results after applying Algorithm 1 to a PSDD compiled formula satisfies the property $\Pr(v_{-x}|do(x))=\Pr(v_{-x})$, where $X$ is any subset of the original variables, and $v_{-x}$ denotes the rest of the original variables.
\end{theorem}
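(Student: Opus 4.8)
The plan is to reduce the claim, exactly as in the earlier general theorem, to a single d-separation statement via Rule 3 of the do-calculus, and then to verify that d-separation by exploiting the specific ``bottom-up'' topology that Algorithm 1 produces. Concretely, I would first observe that to rewrite $\Pr(v_{-x}\mid do(x))$ as $\Pr(v_{-x})$ it suffices, by Rule 3 (taking the intervention to be removed as $\Z=\X$, the already-present do-part empty, and conditioning set $\W=\emptyset$, so that $\Z(\W)=\X$ and the relevant graph is $G_{\overline{X}\,\overline{Z(W)}}=G_{\overline{X}}$), to establish that the intervened original variables $\X$ are d-separated from the remaining original variables $v_{-x}$ in $G_{\overline{X}}$, i.e. $(v_{-x}\independent\X)_{G_{\overline{X}}}$.

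The key difference from the general theorem is that here the original variables are \emph{not} sinks: by Algorithm 1 every original variable has outgoing edges into the augmented nodes (primes, subs, clauses, and finally $v_0$), since arrows always point from a constituent to the aggregate it belongs to, so I cannot simply appeal to the isolated-node theorem. Instead I would analyze the edges incident to the original variables. The only incoming edges to any original variable come from the single (vectorized) hidden node $\mathbf{H}$ via the equations $X_i=H_i$; all their other incident edges are outgoing and point up into the deterministic augmented structure, every node of which is a descendant of the original variables. Passing to $G_{\overline{X}}$ deletes exactly the edges $\mathbf{H}\to X_i$ for $X_i\in\X$, leaving each such $X_i$ with only outgoing edges.

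I would then argue that every path in $G_{\overline{X}}$ between some $X_i\in\X$ and some $X_j\in v_{-x}$ is blocked. No such path can pass through $\mathbf{H}$: that node has no incoming edges, so it can occur on a path only as a fork $\cdot\leftarrow\mathbf{H}\to\cdot$, and since the edge $\mathbf{H}\to X_i$ has been deleted, $X_i$ cannot reach $\mathbf{H}$. Hence every path must stay inside the augmented DAG, in which the original variables are source nodes (no incoming augmented edges). A path joining two source nodes of a DAG must begin and end with edges directed away from its endpoints, so, since a purely directed path would force an incoming edge at one endpoint, it necessarily contains at least one collider $X\to Z\leftarrow Y$. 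Because the conditioning set is empty, neither that collider nor any of its descendants is conditioned on, and by the second clause of the d-separation definition the path is blocked. This yields $(v_{-x}\independent\X)_{G_{\overline{X}}}$, and the claim follows.

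The main obstacle I anticipate is precisely this collider step: making rigorous that the bottom-up orientation forces a collider on every original-to-original path, and confirming that the intervention does not accidentally open one. Here it is essential that the target $v_{-x}$ consists only of original variables (not augmented descendants of $\X$) and that the query conditions on nothing, so no descendant of a collider ever enters the conditioning set. One should also keep $\mathbf{H}$ as a single node, as the paper's formal version does, since splitting it into correlated components $H_1,\dots,H_n$ would require representing their dependence explicitly and could reintroduce active paths.
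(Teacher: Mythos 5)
Your proof is correct and follows essentially the same route as the paper's: reduce the claim via Rule 3 of the do-calculus to a d-separation statement in $G_{\overline{X}}$, then argue that every path between intervened and non-intervened original variables must contain a collider (v-structure) and is therefore blocked under the empty conditioning set. If anything, your write-up is more careful than the paper's, which simply asserts that all connecting paths contain v-structures without explicitly ruling out the fork through the hidden node $\mathbf{H}$ (handled implicitly because $G_{\overline{X}}$ deletes the edges from $\mathbf{H}$ into the intervened variables) --- a case you address directly.
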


\begin{proof}
We are going to use the 3rd rule of Pearl's do-calculus, so it is enough to show that for any path bewtween the original variables is blocked. Let $X$ be the variable we intervene on and let $Y$ be any of the rest of the original variables. We have to show that ${(X \independent Y)}_{G_{\overline{X}}}$. By construction, since the BN is created using Algorithm 1, there are no edges between the original variables. Furthermore, no original variable is a descendant of another one, since the only parent of an original variable is the latent variable. This means that, in $G_{\overline{X}}$, all the paths connecting $X$ and $Y$ contain v-structures, so they are blocked and the 3rd rule is satisfied. Since $Y$ was chosen at random, we can generalize this result for arbitrary subsets of the original variables, concluding the proof. \qed  
\end{proof}

However, when intervening on the augmented variables, we are able to enable non-trivial (but also non-standard) causal reasoning, a point we return to shortly.

\indent Moreover $\textbf{H}$ serves another  purpose as we will shortly discuss. Using the BN from Algorithm 1 without including the hidden variable,  it is not difficult to see that the original PSDD variables are independent, since all the paths connecting them are blocked by {\it v-structures}, meaning that   (2) in Definition \ref{defn:dsep} is satisfied, with $\Z=\emptyset$. On the other hand, it is not necessarily the case that the PSDD distribution encodes such properties about the variables, so there is a chance that the BN distribution enforces independences that do not agree with the PSDD one, rendering the DAG unfaithful \cite{Pearl:2009:CMR:1642718}. By including the hidden variable we eliminate this behaviour, but we introduce a new property, the other extreme, that all of the variables are dependent. This might also not be the actual case either, but we think that it is safer to assume dependency among the variables, rather than independency, which is a fairly strong assumption. A better way to address this behaviour would be to utilize the PSDD distribution and some independency tests in order to decide the subsets of dependent variables, and then use as many hidden variables as the dependent subsets, so we explicitly encode only the dependencies that are implied by the PSDD distribution. (Incidentally, such tests are  used when learning SPNs \cite{SPN_structure_learning}.)  
In this work we are mostly interested in introducing the connection between BNs and PSDDs, so we leave this  for future research. 

 We should also note that for any node $X$ in the graph resulting from Algorithm 1, denoting the set of its parents as $\textbf{PA}_X$, we have:
\begin{equation*}
\Pr(X=1| \textbf{PA}_X) = \begin{cases}
             1  &  \text{if  assignments in } \textbf{PA}_X \text{ render } X=1 \\
             0  &  \text{otherwise }
       \end{cases} 
\end{equation*}
Building on top of this remark, the distribution of $X$ given the specification of any partial subset of its parents $\V \subset \textbf{PA}_X$ is as follows:
\begin{equation*}
\Pr(X=1| \V) = \begin{cases}
             1  &   \text{if  assignments in } \V \text{ render } X=1 \\
             0  &  \text{if  assignments in } \V \text{ render } X=0 \\
            \Pr(X|_{\V}=1)  &  \text{otherwise}
       \end{cases} 
\end{equation*}
where $X|_{\V}$ denotes the formula that results from $X$ after substituting  the assignments from $\V$ in it.  Finally, 
the marginal distribution of $X$, for example,  can be computed by using the PSDD.

\begin{figure}[t]
  \centering
    \includegraphics[scale=0.5]{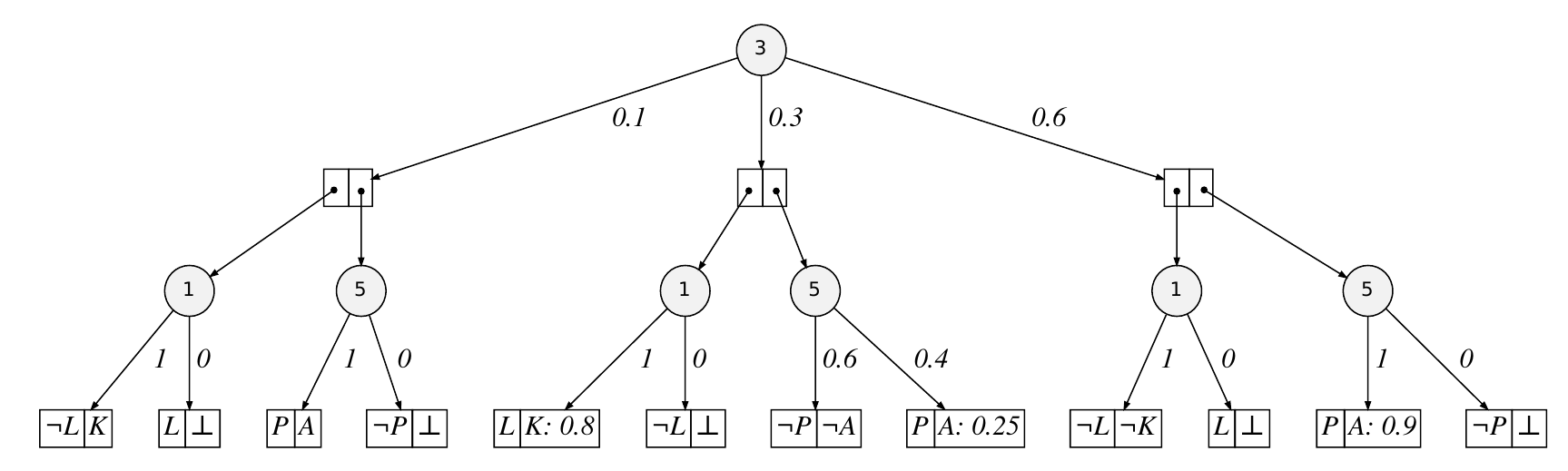}
  \caption{A PSDD over variables $A,L,K,P$}
\end{figure}

{\bf Example:}~~{We will give an example of how to construct a SEM model using a PSDD. We start by studying    the PSDD in Figure 1. This is the PSDD corresponding to a problem considered in \cite{kisa2014probabilistic}. The setting is that there is a department having four courses: Logic (L), Knowledge Representation (K), Probability (P), and Artificial Intelligence (A). Students must enroll to them, but at the same time they have to obey the following constraints:  \( P\lor L \), \( A\Rightarrow P \), \( K\Rightarrow (A\lor L) \) (where implication means if they enroll in LHS, then they must enroll in RHS). The objective is to learn the joint distribution of $L,K,P,A$ using a dataset of student enrollments and the above constraints. The authors utilize PSDDs to perform this task and the resulting model can be seen in Figure 1. }

Starting from the bottom of Figure 1 and  moving towards the root, we see that it corresponds to the following propositional formula:
\begin{align*}
&(((\neg L \wedge K) \vee (L \wedge \bot))) \wedge ((P \wedge A) \vee (\neg P \wedge \bot)))  \\
&\vee (((L \wedge K)\vee (\neg L \wedge \top)) \wedge ((\neg P \wedge \neg A) \vee (P \wedge A))) \\
& \vee (((\neg L \wedge \neg K) \vee ( L \wedge \bot)) \wedge ((P \wedge A) \vee (\neg P \wedge \bot)))
\end{align*}
This is the raw form of the  formula, so some terms are tautologically false. Rewriting the above expression after eliminating inconsistencies yields the following:
\begin{align*}\label{eq:simp form}
&((\neg L \wedge K) \wedge (P \wedge A)) \vee ((L \wedge K) \wedge ((\neg P \wedge \neg A) \vee (P \wedge A))) \\ 
& \vee ((\neg L \wedge \neg K) \wedge (P \wedge A)) \tag{$\star$}
\end{align*}
Algorithm 1 takes  \eqref{eq:simp form}   as input and constructs a SEM model, as follows: The first thing is to create a node corresponding to the whole expression. Then, since \eqref{eq:simp form} is composed of three disjunctions, we make three new variables, one for each of them, and draw arrows from them pointing to the first variable. We continue this procedure recursively; so for example, the term $(\neg L \wedge K) \wedge (P \wedge A)$ is made from two formulas  that are connected with a conjunction, so we create two new nodes, one for $\neg L \wedge K$ and one for $P \wedge A$, and draw arrows from them towards the node representing their conjunction. Now we have reached the point where the formulas under consideration are just conjunctions of literals, so if we look at $\neg L \wedge K$, we make a node for the variable $L$ (although it is  $\neg L$ that is part of the formula) and one for $K$. We repeat the above procedure until we go through all the formulas and in the end we create an additional latent variable that is a parent of all the original PSDD variables, here $A,L,K,P$. The resulting BN can be seen in Figure 2 (Left). It is also worth noting that since we create at most two new nodes for any disjunction or conjunction, the size of the BN is linear in the their number. Furthermore, looking at the procedure described above, we see that Algorithm 1 is not directly applicable to SPNs, since sums and products are between distributions, while in PSDDs, conjunctions are disjunctions are between variables, which is exactly what Algorithm 1 exploits in order to construct the resulting SEM.

\begin{figure}[t]
  \centering
  \begin{tabular}{cc}
         \includegraphics[scale=0.5]{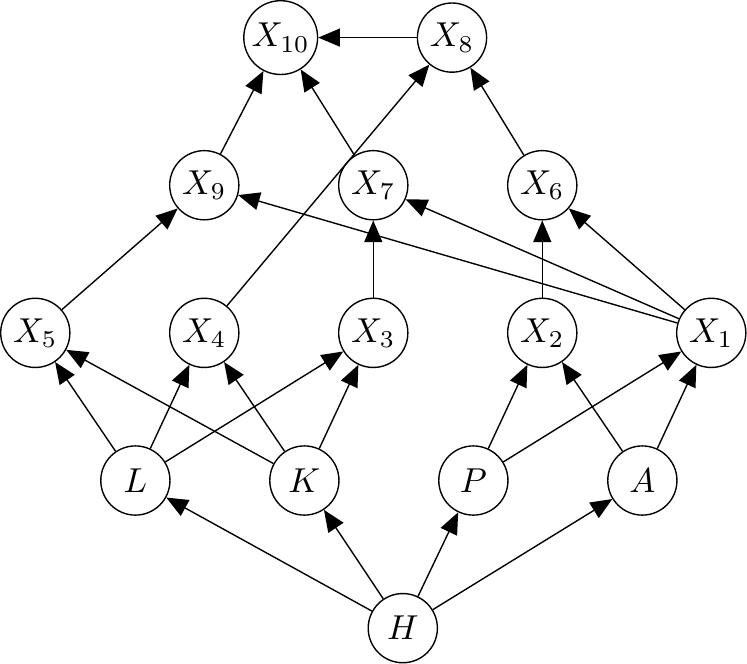}
  ~~~~~
          \includegraphics[scale=0.5]{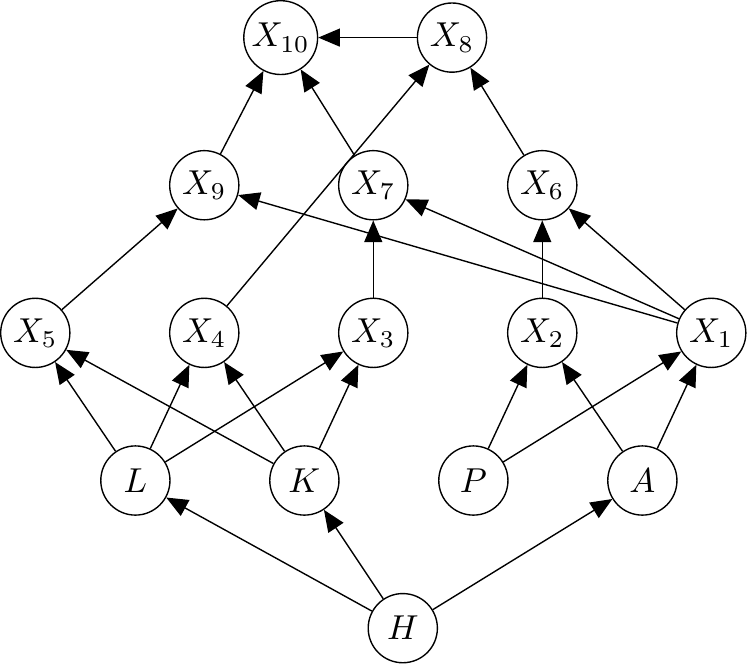}

  \end{tabular}
  \caption{Left: The augmented BN model corresponding to the PSDD of figure 1. Right: The same model, after intervening on $P$.}

\end{figure}

We have kept the names of the original variables the same and have named the rest as $X_1,\dots , X_{10}$. In addition, the latent variable is a vector \( (H_1, \ldots, H_4) \) of 4  random variables, since the PSDD was over four variables. By construction, it is now apparent that the structural equations of this BN are:
\begin{align*}
&A=H_1, L=H_2, K=H_3, P=H_4, X_1 = P \wedge A, X_2 = \neg P \wedge \neg A, \\
& X_3 = \neg L \wedge K, X_4 = L \wedge K,  X_5 = \neg L \wedge \neg K, X_6 = X_1 \vee X_2,  \\
& X_7 = X_1 \wedge X_3,X_8 = X_4 \wedge X_6,X_9 = X_1 \wedge X_5,X_{10} = X_7 \vee X_8 \vee X_9
\end{align*}

We can immediately see that 
an intervention on one of the augmented variables will result in a non-trivial interventional distribution. We expand on this point in the next section.  

\subsection{Interventions}

In this section we are going to state a result connecting the interventional to the observational distribution. The idea behind is that, since there is no noise in the graph, every augmented variable will be a deterministic function of its parents. In turn, this reduces the problem of estimating interventional queries to a simpler one, which is, essentially, a problem of counting all the possible assignments of the parents of the intervened variable, that then leads to the intervened variable getting a corresponding  value. This transforms the question from a statement of the form ``how probable it is to observe $Y=y$, given an intervention $X=x$" to a statement of the form ``how probable it is to observe $Y=y$ and $X=x$, simultaneously". Formally, we have the following:
\begin{theorem}
Let $\pazocal{G}$ be the DAG resulting from Algorithm 1, and let $Y,X$ be two augmented variables. Then for any intervention $do(X=x)$, we have that $\Pr(Y=y|do(X=x)) = \Pr(Y=y,X=x)$. 
\end{theorem}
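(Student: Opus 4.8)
The plan is to exploit the fact that, apart from the latent root $\mathbf{H}$, the SEM produced by Algorithm 1 is entirely deterministic: every augmented variable is a fixed Boolean function of its parents, and hence, by tracing the arrows back, of $\mathbf{H}$. For this reason I would compute the interventional distribution directly through the truncated factorization (g-formula) rather than through a single clean application of do-calculus. Writing $\mathbf{h}$ for an assignment to $\mathbf{H}$ (equivalently a point in the PSDD support, since $\Pr(\mathbf{H}=\mathbf{h})=\Pr_{PSDD}(\mathbf{h})$ by Theorem~\ref{psddprop}), the idea is to express both the interventional distribution and the observational joint as weighted counts over the same index set $\{\mathbf{h}\}$ and then match them term by term.

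First I would set up the truncated factorization for $do(X=x)$: delete the factor for $X$, clamp $X=x$, and keep every other mechanism. Using the deterministic conditionals noted above (each $\Pr(v_i \mid \mathbf{PA}_{v_i}) \in \{0,1\}$), the interventional probability of any complete assignment collapses to $\Pr(\mathbf{H}=\mathbf{h})$ whenever that assignment is consistent with all retained mechanisms and with $X=x$, and to $0$ otherwise. Summing over the assignments in which $Y$ evaluates to $y$ then gives $\Pr(Y=y\mid do(X=x)) = \sum_{\mathbf{h}} \Pr(\mathbf{h})\,\mathbbm{1}[\,Y_{X=x}(\mathbf{h})=y\,]$, where $Y_{X=x}(\mathbf{h})$ denotes the value $Y$ takes when the root is $\mathbf{h}$ and $X$ is forced to $x$, with the descendants of $X$ recomputed from the clamped value.

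Next I would write the target right-hand side in the same shape: since both $X$ and $Y$ are deterministic in $\mathbf{h}$, $\Pr(Y=y,X=x) = \sum_{\mathbf{h}} \Pr(\mathbf{h})\,\mathbbm{1}[\,Y(\mathbf{h})=y,\ X(\mathbf{h})=x\,]$, where $X(\mathbf{h})$ and $Y(\mathbf{h})$ are the natural (un-clamped) values. The two sums agree term by term on every $\mathbf{h}$ with $X(\mathbf{h})=x$ already, because clamping $X$ to a value it would have taken anyway leaves all downstream variables, in particular $Y$, unchanged. So the whole identity reduces to controlling the remaining terms, those $\mathbf{h}$ with $X(\mathbf{h}) \neq x$.

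This last reduction is where I expect the real work, and the main obstacle, to lie. For an $\mathbf{h}$ with $X(\mathbf{h})\neq x$ the observational indicator is $0$, so I must show the interventional indicator is $0$ as well, i.e. that forcing $X=x$ on such an $\mathbf{h}$ cannot produce $Y=y$. This is not automatic: it would fail, for instance, if $Y$ sat above $X$ through a disjunction, since clamping $X$ to $1$ can switch $Y$ to $1$ on assignments where $X$ was naturally $0$. The argument therefore has to use the monotone Boolean shape of $Y$ as a function of $X$ (the conjunction/disjunction nodes built by Algorithm 1) together with the support constraints baked into the PSDD base $[\cdot]$, exactly the mechanism visible in the course example, where the constraints $P\lor L$, $A\Rightarrow P$, $K\Rightarrow(A\lor L)$ force every in-support assignment with $X_3=1$ to also have $X_1=1$. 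I would expect the clean statement to need a hypothesis pinning down the $X$--$Y$ relationship, and a supporting lemma asserting that on the PSDD support the clamped and natural evaluations of $Y$ coincide; I would scrutinise that step most carefully, since it is the only place where the identity can break.
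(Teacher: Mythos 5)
Your route is genuinely different from the paper's, and the difference is substantive. The paper proves the identity by back-door adjustment on the parents $X_1,\dots,X_N$ of $X$: it expands $\Pr(Y=y\mid do(X=x))=\sum_{x_1,\dots,x_N}\Pr(Y=y\mid X=x,x_1,\dots,x_N)\cdot\Pr(x_1,\dots,x_N)$, declares every term whose parent assignment is inconsistent with $X=x$ to be zero, and collapses the remaining terms to $\Pr(Y=y,X=x)$. You instead compute $\Pr(Y=y\mid do(X=x))$ from the definition of an intervention in a SEM (truncated factorization / submodel evaluation over $\mathbf{H}$), and your two-sum comparison is correct as far as it goes. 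The place where you stopped is not a deficiency of your argument: it marks a real divergence between the two calculations. The terms the paper discards are not zero but $0/0$ --- for a parent assignment with $X(x_1,\dots,x_N)\neq x$ the conditioning event $\{X=x,X_1=x_1,\dots,X_N=x_N\}$ has probability zero, so the conditional is undefined --- and both the back-door adjustment formula and the do-calculus rules it rests on are only licensed under positivity of exactly these conditionals, which fails everywhere in this noise-free model. The paper's proof therefore does not derive the identity from the submodel semantics; it implicitly takes the (truncated) adjustment formula itself as the meaning of $do(\cdot)$.

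Your diagnosis of where the identity breaks is confirmed by the paper's own example: take $Y=X_6=X_1\vee X_2$ and intervene $do(X_1=1)$. Under the submodel semantics (the same semantics the paper itself uses in its counterfactual section via abduction--action--prediction), clamping $X_1=1$ forces $X_6=1$, so $\Pr(X_6=1\mid do(X_1=1))=1$; but $\Pr(X_6=1,X_1=1)=\Pr(X_1=1)=\Pr(P=1,A=1)=0.67$ from Table 1. So the stated theorem, read with the standard structural-equation definition of intervention, is false without additional hypotheses --- precisely the hypotheses you anticipated, namely a condition guaranteeing that on the support of $\mathbf{H}$ the clamped and natural evaluations of $Y$ coincide, which fails whenever $Y$ sits above $X$ through a disjunction (or, dually, a conjunction clamped to $0$). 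In short, you did not miss the paper's proof; you found the positivity gap that the paper's proof steps over, and your $\mathbf{h}$-indexed decomposition is the cleaner framework, since it separates the uncontroversial terms ($X(\mathbf{h})=x$, where consistency of SEMs gives agreement) from the terms on which the claimed identity actually fails.
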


\begin{proof}
We are going to base our proof on the back-door criterion \cite{Pearl:2009:CMR:1642718}, adjusting for the parents of $X$, $X_1,\cdots,X_N$. By doing that, we can rewrite our expression as follows:
\begin{align*}
&\Pr(Y=y|do(X=x)) = \\
&\sum_{x_1,\cdots,x_N} \Pr(Y=y|X=x,X_1=x_1,\cdots,X_N=x_N)\cdot \Pr(X_1=x_1,\cdots,X_N=x_N) \\ &= \sum_{x_1,\cdots,x_N : X=x} \Pr(Y=y|X_1=x_1,\cdots,X_N=x_N)\cdot \Pr(X_1=x_1,\cdots,X_N=x_N) \\ &= \sum_{x_1,\cdots,x_N : X=x} \Pr(Y=y,X_1=x_1,\cdots,X_N=x_N) =  \Pr(Y=y, X=x)
\end{align*}
The first equality is due to the back-door criterion, the second one is because since $X$ is a deterministic function of its parents, all the assignments of $X_1,\cdots,X_N$ that do not result in $X=x$, make the term $\Pr(Y=y|X=x,X_1=x_1,\cdots,X_N=x_N)$ equal to zero. On the other hand, all the assignments that result in $X=x$ lead to $\Pr(Y=y|X=x,X_1=x_1,\cdots,X_N=x_N) = \Pr(Y=y|X_1=x_1,\cdots,X_N=x_N)$, since now the condition $X=x$ is redundant. \qed
\end{proof}

The above formula clearly gives rise to a non-trivial distribution, although its usefulness and the overall utility of performing causal analysis based on it, should probably be assessed depending on the application. In the next section we demonstrate how counterfactual queries can be estimated using the output of Algorithm 1.

\subsection{Counterfactuals}

\indent In this section we will examine if it is possible to use the BN from Algorithm 1 in order to compute counterfactual quantities. We will mostly investigate counterfactuals conditioned on some evidence, which is equivalent to computing probabilistic statements of the form $\Pr(Y=y|do(X=x),E=e)$. These statements can be handled using the following major result \cite{Pearl:2009:CMR:1642718}: 

\begin{theorem} 
Let $\pazocal{M}$ be a causal model and $\Pr(\cdot)$ a probability measure over the variables in $\U$. The counterfactual probability $\Pr(Y=y|do(X=x),E=e)$, meaning \enquote{Had $X$ been $x$ then $Y$ would have been $y$, given evidence $e$}, can be computed as follows:
\begin{itemize}
\item \textbf{Abduction: } Update the distribution $\Pr(\U)$ by incorporating the evidence, to obtain $\Pr(\U|e)$.
\item \textbf{Action: } Construct the graph that results from the intervention $do(X=x)$.
\item \textbf{Prediction: } Use the probability measure and the graph, from the previous steps, to compute the probability of $Y=y$.
\end{itemize}
\end{theorem}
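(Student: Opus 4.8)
The plan is to unfold the formal definition of the counterfactual quantity $\Pr(Y=y \mid do(X=x), E=e)$ in the language of submodels, and then show that it factors exactly into the three listed operations. Recall that in a causal model $\pazocal{M}=(\pazocal{S},\pazocal{F})$ every exogenous assignment $\U=u$ deterministically fixes all endogenous variables through $\pazocal{F}$. Writing $Y_{X=x}(u)$ for the value that $Y$ attains in the submodel $\pazocal{M}_x$ obtained by replacing the equation for $X$ with the constant assignment $X=x$, the counterfactual is defined by $\Pr(Y=y \mid do(X=x), E=e) = \sum_{u} \mathbbm{1}[Y_{X=x}(u)=y]\,\Pr(u \mid e)$. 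The entire argument amounts to reading off the three steps from this single expression.

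First I would treat the factor $\Pr(u \mid e)$, which is exactly the \textbf{Abduction} step. The key observation is that, because all randomness in $\pazocal{M}$ resides in $\U$, the evidence $E=e$ is a statement purely about deterministic functions of $u$; hence $\Pr(e \mid u)$ reduces to an indicator of whether $u$ is consistent with $e$, and Bayes' rule gives $\Pr(u\mid e) = \Pr(e\mid u)\Pr(u)/\Pr(e)$, i.e.\ the prior $\Pr(\U)$ reweighted onto the $u$'s compatible with the factual observation. This justifies replacing $\Pr(\U)$ by $\Pr(\U \mid e)$. Next I would account for the indicator $\mathbbm{1}[Y_{X=x}(u)=y]$, which encodes both the \textbf{Action} and \textbf{Prediction} steps. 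Evaluating $Y_{X=x}(u)$ requires solving the submodel $\pazocal{M}_x$; at the graphical level this submodel is precisely the mutilated graph $G_{\overline{X}}$ with $X$ clamped to $x$ (Action), and solving for $Y$ in it given the exogenous value $u$ is the deterministic forward evaluation of the remaining structural equations (Prediction). Summing the product $\mathbbm{1}[Y_{X=x}(u)=y]\,\Pr(u\mid e)$ over $u$ then returns $\Pr_{\pazocal{M}_x}(Y=y)$ computed under the abduced distribution, which is what the three steps compute in sequence.

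The main obstacle is justifying the \emph{order} and \emph{separation} of the operations: one must argue that the evidence update and the intervention act on disjoint components of the model — the former on the exogenous distribution $\Pr(\U)$, the latter on the structural equations $\pazocal{F}$ — and therefore compose cleanly. Crucially, abduction must precede action, because the evidence constrains the \emph{factual} world whereas the query concerns the \emph{counterfactual} world, and the two are linked only through the shared exogenous variables $\U$; it is exactly the persistence of $u$ across the replacement of the equation for $X$ that makes the common factor $\Pr(u\mid e)$ legitimate. Were one to condition on $e$ after intervening, the evidence would be interpreted in the wrong world and the coupling through $\U$ would be lost.

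In the SEM produced by Algorithm 1 this is especially transparent, and I would close the proof by specializing to that case: here $\U$ is the single latent vector $\textbf{H}$ and every endogenous variable is a deterministic function of it, so abduction reduces to conditioning $\Pr(\textbf{H})$ on $e$, action to clamping the intervened (augmented) variable in $G_{\overline{X}}$, and prediction to evaluating the compiled Boolean structural equations under the abduced distribution of $\textbf{H}$. This both verifies the general procedure and shows it is effectively computable on the graphs output by our construction.
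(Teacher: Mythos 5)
The paper itself gives no proof of this statement: it is imported as a known result from Pearl (2009), cited immediately before the theorem, and the paper proceeds directly to apply it to the SEM of Algorithm 1. So the comparison is against the standard literature proof, and your argument is essentially that proof: unfold $\Pr(Y=y\mid do(X=x),E=e)$ as a sum over exogenous configurations $u$, identify the factor $\Pr(u\mid e)$ with abduction, and identify the evaluation of $Y_{X=x}(u)$ in the submodel $\pazocal{M}_x$ with action followed by prediction. One caveat on presentation: your opening identity $\Pr(Y=y\mid do(X=x),E=e)=\sum_u \mathbbm{1}[Y_{X=x}(u)=y]\,\Pr(u\mid e)$ is very nearly the theorem itself, so to avoid circularity the argument should start one step earlier, from the primitive definition $\Pr(Y_{X=x}=y,E=e)=\sum_u \mathbbm{1}[Y_{X=x}(u)=y]\,\mathbbm{1}[E(u)=e]\,\Pr(u)$, in which the evidence is evaluated in the \emph{factual} model and $Y$ in the submodel, coupled only through the shared $u$, and then divide by $\Pr(e)$; your Bayes-rule step and your explicit remark that the two worlds are linked only through $\U$ supply exactly this content, so the gap is one of ordering rather than substance. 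Your closing specialization to the SEM of Algorithm 1, where $\U=\textbf{H}$ and abduction reduces to conditioning $\Pr(\textbf{H})$ on $e$, matches precisely how the paper applies the theorem in its worked counterfactual example.
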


\begin{table} \small \centering
    \begin{tabular}{l|l|l|l||l}

$L$ &  $K$ & $P$ & $A$ & PSDD distribution\\
\hline
0 & 0 & 1 & 0 & 6.0\%\\
\hline
0 & 0 & 1 & 1 & 54.0\%\\
\hline
0 & 1 & 1 & 1 & 10.0\%\\
\hline
1 & 0 & 0 & 0 & 3.6\%\\
\hline
1 & 0 & 1 & 0 & 1.8\%\\
\hline
1 & 0 & 1 & 1 & 0.6\%\\
\hline
1 & 0 & 0 & 0 & 3.6\%\\
\hline
1 & 1 & 0 & 0 & 14.4\%\\
\hline
1 & 1 & 1 & 0 & 7.2\%\\
\hline
1 & 1 & 1 & 1 & 2.4\%\\
\end{tabular}
    \caption{Distribution of $L,K,P,A$ from \cite{kisa2014probabilistic}}

    \label{tab:my_label}
\end{table}

\indent Since our model is deterministic, we do not have to do a lot of probabilistic calculations, but mostly we are going to manipulate logical expressions. We will go on with our working example to demonstrate how  we could study counterfactuals and their properties. The question of interest is the following: Supposing we have observed that $X_1=0$, what is the probability it would have been equal to 1, had P been equal to 1? At this point we would like to emphasize that an intervention on one of the augmented variables corresponds to multiple interventions on the original ones. For example, suppose that later on we decide to intervene on $X_1$ and force it to become equal to zero. In turn, this would mean that we force $A \wedge P$ to become zero. We  notice that this outcome can be achieved by several assignments on these two variables, namely $A=P=0$ , $A=0 \text{ and } P=1$, and  $A=1 \text{ and } P=0 $. This means that a single intervention on $X_1$ induced three interventions on $A$ and $P$ simultaneously. {We would also like to emphasize that although $X_1$ belongs to the augmented set of variables, it still has an interpretation relating it to the original variables, as it is the case with any of the augmented variables. In this case, $X_1$ just represents the event of taking both courses, $A$ and $P$.}

Formally, we ask for the probability of the following expression $\Pr(X_1=1|do(P=1),X_1=0)$. The first step is to update the distribution of our exogenous variables (in our case, this is \textbf{H}) conditioning on the evidence $X_1=0$. As we have already discussed, $X_1=0\Leftrightarrow (P \wedge A) =0$, so  $P$ or $A$ is equal to zero. This means the updated distribution should assign zero probability to the case of $P$ and $A$ being true at the same time. Since this is the only fact we can recover from the conditioning observation, the posterior and the prior distributions should agree on all other cases. Thus, we end up with $\Pr(H_1,H_2,H_3,H_4| X_1=0)$ being obtained as: 
\begin{equation*}
 \begin{cases}
              \frac{\Pr(H_1,H_2,H_3,H_4,X_1=0)}{\Pr(X_1=0)} = \frac{\Pr(H_1,H_2,H_3,H_4)}{\Pr(X_1=0)} & \text{if }H_1=0 \text{ or }H_4=0 \\
             0  & \text{otherwise }
       \end{cases} 
\end{equation*}
The upper branch equality follows from Bayes' Theorem and the fact that $(H_1=0 \vee H_4=0) \Leftrightarrow X_1=0$.
Next, we construct the graph corresponding to the world where we intervene on $P$ and force it to be true, which is shown in Figure 2 (Right). Now we update the structural equations, by substituting $P=1$ to all the equations. Since we are not going to make use of all of them in this particular example, we will write down only the first few.  
\begin{align*}
& A=H_1, L=H_2, K=H_3, P=1, X_1 = A, X_2 = 0
\end{align*}
Now we are ready to perform all the desired calculations,in our case the probability of $X_1=1$ in the causal graph of Figure 2 (Right). We proceed as follows:
\begin{align*}
\allowbreak
\Pr(X_1=1)&=\Pr(A=1)=\Pr(H_1=1)=\Pr(H_1=1,H_4=0)\\
&=\sum_{H_2}\sum_{H_3} \frac{\Pr(H_1=1,H_2,H_3,H_4=0)}{\Pr(X_1=0)}
\end{align*}
We immediately see that all of the needed probabilistic quantities can be calculated right away using the PSDD and the correspondence between $H_1,H_2,H_3,H_4$ and $A,P,K,L$. \smallskip 

{\indent We could also ask more complex counterfactual queries as well. This time we will include the actual numeric values, so that we can compare the various distribution of the variable of interest. The data is taken from \cite{kisa2014probabilistic} is shown here, but the full calculations can be found in the supplementary material. The question this time is supposing we have witnessed that $X_9=P \wedge A \wedge \neg L \wedge \neg K=0$, meaning there is a student not satisfying the property \enquote{he/she has taken both $A$ and $P$, while not taking neither $L$ or $K$}: what is the probability of him/her satisfying this property, had $A$ been equal to $1$. We repeat the same steps as before, to obtain the probability $\Pr(X_9=1|do(A=1),X_9=0)$, which turns out to be equal to ${\Pr(H_1=0,H_2=0,H_3=0,H_4=1)}/{0.46} = {0.06}/{0.46} \approx 0.13$. We compare the resulting counterfactual distribution to the conditional $\Pr(X_9|A=1)$ and the plain marginal $\Pr(X_9)$. The results can be seen in Figure \ref{distr}. It is evident that the counterfactual distribution is vastly different from the others, expanding the semantics of PSDDs in a non-trivial way.  }

\begin{figure}[t]
  \centering
    \includegraphics[scale=0.62]{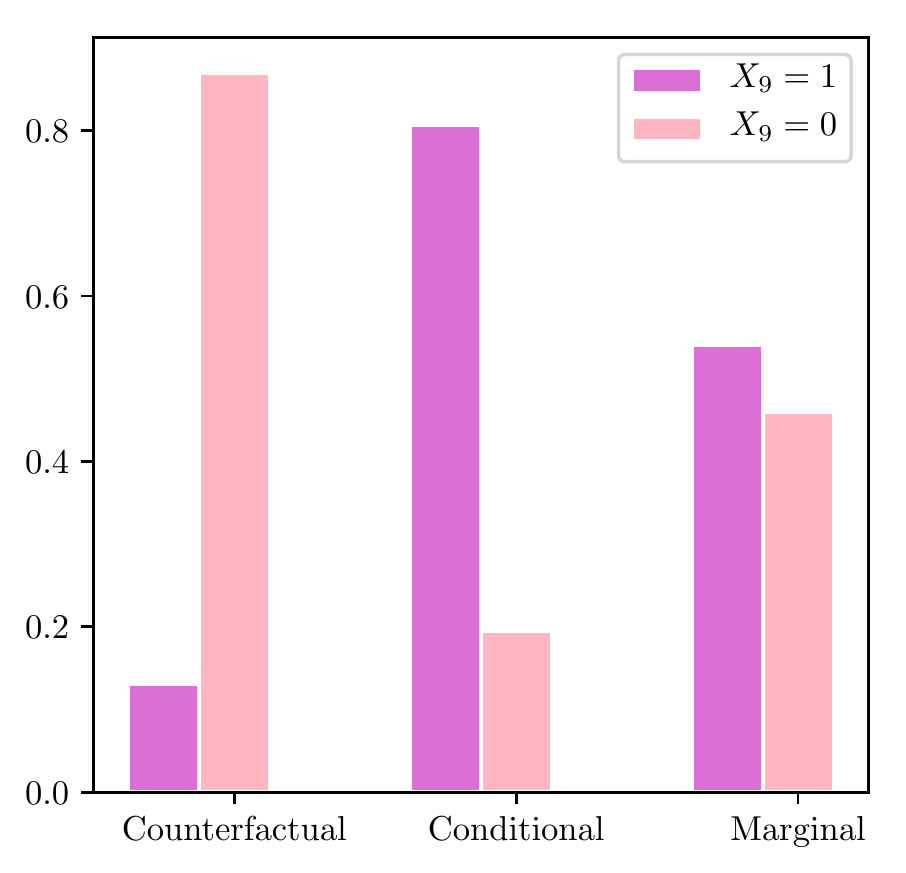}
  \caption{Comparison of distributions}
  \label{distr}
\end{figure}

\section{Discussion and Conclusions}

Tractable models are attractive in offering polynomial time inference capabilities, and hence are gaining in popularity. The theoretical properties of such models have received considerable attention recently. 
The question of whether these models can also be useful for causal reasoning was studied in this work, and we showed that the results are mostly of a negative nature. For SPNs, we showed that we cannot really study interventional distributions. For PSDDs, we motivated a way to construct a SEM from a trained PSDD. We showed that when intervening on the original variables, the situation is once again uninteresting, but when non-trivial properties emerge when augmented variables are considered. While this does provide a causal semantics for PSDDs, we observe the causal graph is very unusual in lacking noise. So, the overall usefulness of these class of tractable models is questionable for causal reasoning.\textit{ We would like to reiterate that the thrust of this contribution assumes that the only information we have is the probabilistic circuit. Clearly if we had the original BN in hand, we would perform causal reasoning directly on that BN.} However, starting from the circuit, we show that going to the BN loses information about the underline mechanisms that the variables interact with each other, as it is evident when using SPNs.
For PSDDs, although the outcome of the analysis is about the same, the problem is of a different nature, and it is mostly due to absence of latent factors. In many cases, in causal modeling, we tend to include some latent variables, in order to account for unobserved background factors, but it is unclear why one should do this for PSDDs. From a causal viewpoint, SPNs and PSDDs also seem to be on the opposite sides of the spectrum, one former attributing everything to latent factors, while the latter attributing nothing to them. To recap, SPN sum nodes define weighted mixtures over their children, while PSDD decision nodes are propositional expressions over them. This difference lies in the core of the nature of our results.

\indent We think there are many interesting directions for the future. For example, given our last observation about latent factors, are there tractable models that enable causal graphs with lie somewhere on the middle ground wrt causal graphs? Current structure learning algorithms for tractable models also do not attempt to capture the underlying causal process. In that regard, the quality of the causal graph obtained in Algorithm 1 is only going to be as good as the quality of the PSDD. We think there are three ways to improve this situation, both under the assumption that we are in possession of prior knowledge in terms of certain dependencies and independencies. Firstly, a brute force (and very likely inefficient) structure learner would first build a PSDD, use Algorithm 1 to recover all the dependencies and interactions between variables and test whether our prior knowledge is in agreement with what the PSDD has learned. An insufficient model would then be discarded by means of a suitable evaluation metric. Secondly, it is shown in \cite{liang2017learning} that the training of PSDDs can be subjected to logical prior knowledge. It may be possible to extend that approach, in that we learn PSDDs that are also subjected to independency constraints expressed as probabilistic prior knowledge. Thirdly, and perhaps most significantly, investigating whether ideas from the existing literature on learning causal relations (e.g.,\cite{DBLP:conf/icml/GhassamiSKB18}) can be imported to tractable learners is a worthwhile question. Of course, such an endeavor would be most useful if we discover ways to augment SPNs and PSDDs in some (clever) way that goes beyond trivial and/or deterministic reasoning. That would be perhaps the main open challenge resulting from our work.\\

\bibliographystyle{abbrv}
\bibliography{paper}

\clearpage

\end{document}